\newtheorem{cor}{Corollary}[]
\title{Learning Robust Representation for Clustering through Locality Preserving Variational Discriminative Network}
\author{
Ruixuan Luo\textsuperscript{\rm \dag}, 
Wei Li\textsuperscript{\rm \ddag}, 
Zhiyuan Zhang\textsuperscript{\rm \ddag}, 
Ruihan Bao\textsuperscript{\rm $\diamondsuit$}, 
Keiko Harimoto\textsuperscript{\rm $\diamondsuit$}, 
Xu Sun\textsuperscript{\rm \dag,\ddag}\\
}
\begin{document}

\maketitle

\begin{abstract}

Clustering is one of the fundamental problems in unsupervised learning. 
Recent deep learning based methods focus on learning clustering oriented representations. Among those methods, Variational Deep Embedding achieves great success in various clustering tasks by specifying a Gaussian Mixture prior to the latent space. However, VaDE suffers from two problems: 1) it is fragile to the input noise; 2) it ignores the locality information between the neighboring data points. 
In this paper, we propose a joint learning framework that improves VaDE with a robust embedding discriminator and a local structure constraint, which are both helpful to improve the robustness of our model. Experiment results on various vision and textual datasets demonstrate that our method outperforms the state-of-the-art baseline models in all metrics. Further detailed analysis shows that our proposed model is very robust to the adversarial inputs, which is a desirable property for practical applications.


\end{abstract}

\section{Introduction}

Clustering is an essential problem in unsupervised learning, which aims to group unlabeled data based on their similarities. Over the past decades, several traditional clustering methods such as k-means and Gaussian Mixture Model (GMM) \citep{kohonen1990self,hartigan1979algorithm,ester1996density} have been established, which measure the distances of data points using Euclidean distance and minimize the within-class variances. However, as the dimension of the input data grows in the feature space, the clustering with Euclidean distance will deteriorate significantly. In order to solve the problem, various dimension reduction methods \citep{turk1991eigenfaces, belhumeur1997eigenfaces, donoho2003hessian} have been proposed to convert data points from the feature space to a lower-dimensional space as a pre-processing step for clustering.

Due to the recent success, deep learning based clustering receives lots of attention. Earlier approaches adopt deterministic models such as Autoencoder to learn a latent representation by minimizing the reconstruction error. Furthermore, constraints are imposed to make the latent representation more suitable for the clustering \citep{xie2016unsupervised,yang2017towards}. 
Other attempts build clustering methods based on generative models because of their ability to learn the inherent data structure.
\citet{mukherjee2019clustergan} design a GAN based model which contains an inverse-mapping network and a clustering-specific loss. Their model achieves high scores on various tasks, but is vulnerable to the unbalanced data. \citet{jiang2016variational} propose Variational Deep Embedding (VaDE) that extends VAE by adding a GMM constraint to the latent layer. 
While VaDE achieves the state-of-the-art performance in various clustering tasks, it suffers from two major drawbacks: 1), as VaDE trains a network that tries to recover the original inputs, the latent representations are vulnerable to the input noise; 2), it only models the global structure of the latent representations, 
while ignoring the local structure between two latent variables which carries important information.

In this paper, we resolve the problems of VaDE by proposing a \textbf{L}ocality \textbf{P}reserving \textbf{V}ariational \textbf{D}iscriminative \textbf{N}etwork (LPVDN) that encodes robust global and local data structure in the latent representations used for clustering, which is stable for noise and unbalanced input. Specifically, our work first models the global data structure using a GMM regularized Variational Autoencoder (VAE), because VAE is good at capturing the global structure of data distribution. Then we improve the \textbf{robustness} of the \textbf{global} structure modeling by introducing an embedding discriminator that maximizes the mutual information between the input data and the latent representations, because maximizing the mutual information can improve the robustness to the input noise.\footnote{A mathematical proof is given in the Appendix.} 
After the global representations are available,
we encode the \textbf{local} structure information by modeling pairwise distance probability and minimizing the KL-divergence of the distribution. We assume that the global structure is more sensitive to the adversarial noise, while local structures can be helpful to reduce the impact of the noise.

Specifically, to capture the local structures, we propose a locality preserving network module which is inspired by t-SNE \citep{maaten2008visualizing} due to its effectiveness and simplicity. It employs a Student t-distribution with one degree of freedom in the low-dimensional map and uses it to fit the joint probability in the high-dimensional space. The optimization objective attracts similar points while introduces strong but finite repulsion between dissimilar points. 

We perform extensive experiments on both vision and textual clustering datasets. Experiments show that our model achieves the state-of-the-art results on all the metrics. Further analysis testifies the robustness under different degree of data poisoning of our model and reveals its potential for data dimension reduction and visualization.

\begin{figure}[t]
    \centering
    \includegraphics[width=\linewidth]{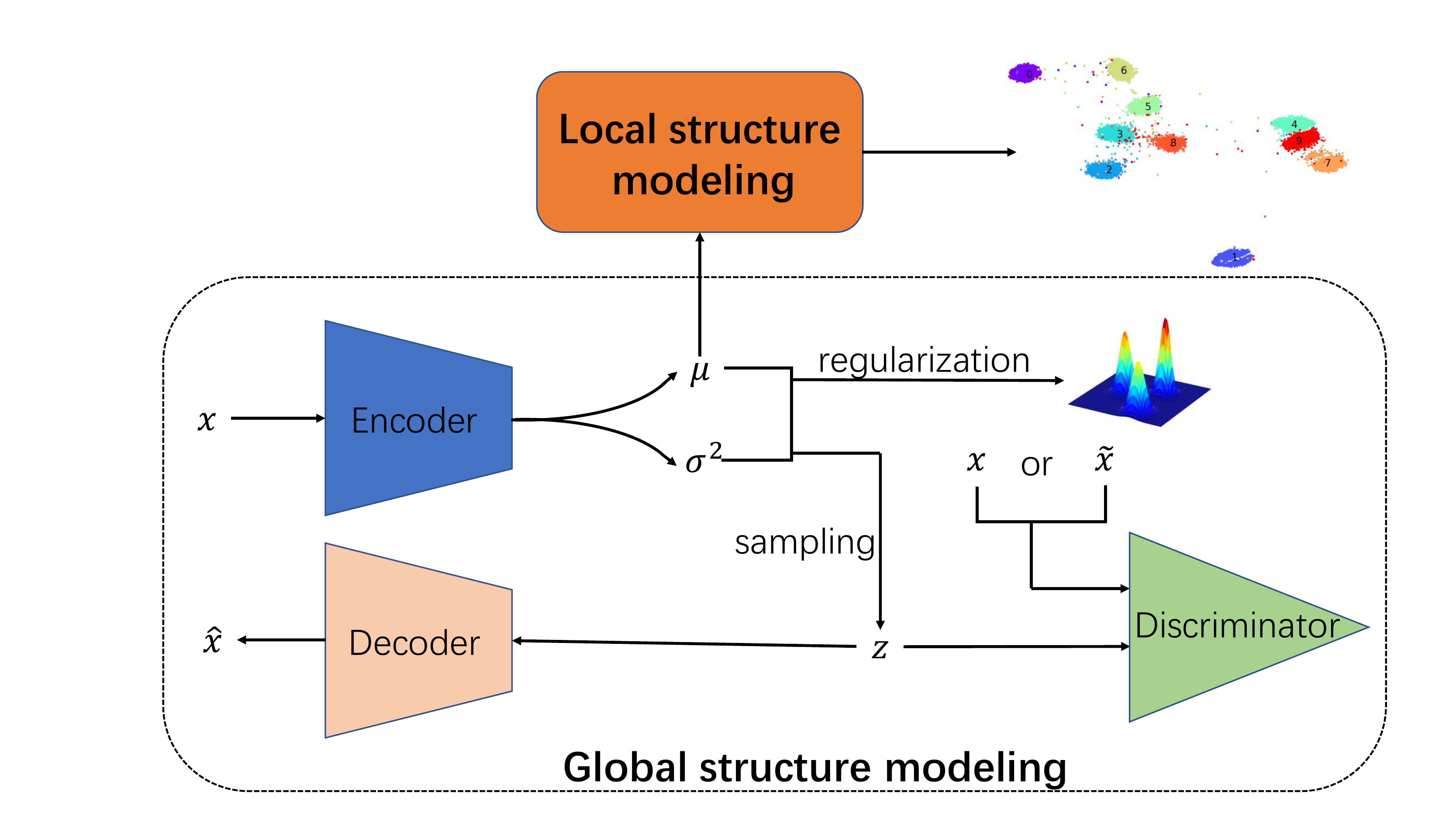}
    \caption{Illustration of Locality Preserving Variational Discriminative Network (LPVDN). Our model retains both global (lower part) and local (upper part) structure. Latent variables $\mu$ and $\sigma$ are regularized to a mixture of Gaussian distribution.  
    }
    \label{fig:model}
\end{figure}

We conclude our main contributions as follows:
\begin{itemize}
    \item 
    We propose a robust locality preserving variational discriminative network that retains both global and local data structure, which is also robust to the noise in the input. \footnote{Our code is released at https://github.com/lancopku/LPVDN}
    \item 
    Extensive experiments on both vision and textual benchmark datasets show that our proposed model outperforms the state-of-the-art clustering methods. Analysis testifies the superiority of the robustness of our model over previous state-of-the-art methods.
    
    \item 
    Further analysis shows that our model's performance is steady even when the dimension of the latent representations reduces to two, which means that our model can be served as a promising candidate for data visualization.
\end{itemize}

\section{Related Work}

Recently, deep learning-based clustering methods attract lots of attention due to its success in various problems.

One branch of the popular methods are based on Autoencoders, in which a latent feature representation can be learned by minimizing reconstruction errors. Specifically, \citet{yang2017towards} propose DCN by incorporating the K-means loss with the reconstruction loss to perform clustering and feature embedding simultaneously. \citet{huang2014deep} introduce DEC with a KL-divergence loss between soft assignments of the clusters and an auxiliary distribution to minimize within-clustering distance. In order to achieve better performance, the paper initializes the parameters with general Autoencoder training, which is adopted as the major practice for other clustering networks using deep learning. \citet{ji2017deep} apply the idea of subspace clustering and introduce a self-expressive layer to the Autoencoder, the trained parameters of the self-expressive layer are used to construct an affinity matrix used by spectral clustering. Leveraging recent progress in image recognition, \citet{ghasedi2017deep} adopt a convolutional Autoencoder with a softmax layer stacked on top of the encoder. \citet{yang2019deep} extend the idea of convolutional Autoencoder by proposing a joint learning framework of spectral clustering and discriminative embedding which is regularized by a normal distribution. As convolutional Autoencoder can extract important image features. Both methods work extremely well for image clustering tasks, but the performance often drops significantly for non-image clustering tasks.


Driven by the success of VAE \citep{kingma2013auto} and GAN \citep{goodfellow2014generative}, generative models become favourable choices for clustering due to its ability to capture the data distribution. Among those methods, Variational Deep Embedding \citep{jiang2016variational} extends VAE by assuming latent variables following a mixture of Gaussian distribution, where the means and variances of the Gaussian components are trainable. Deep Adversarial Clustering \citep{harchaoui2017deep} proposes an adversarial Autoencoder, which utilizes an adversarial training procedure to match the aggregated posterior of the latent representation with the prior distribution. CategorialGAN \citep{springenberg2015unsupervised} generalizes the GAN framework to multiple classes. InfoGAN \citep{chen2016infogan} maximizes the mutual information between a subset of the latent variables and the observation, which achieves disentangled representations for clustering. ClusterGAN \citep{mukherjee2019clustergan} samples latent variables from a mixture of one-hot and continuous latent variables, and the inverse network is trained jointly with a clustering specific loss. However, as it requires sampling one-hot vectors from a uniform distribution of the clusters, when the data is highly unbalanced, the performance will experience a serious decline.  

\section{Locality Preserving Variational Discriminative Network (LPVDN)}

In this section, we describe our proposed \textbf{L}ocality \textbf{P}reserving \textbf{V}ariational \textbf{D}iscriminative \textbf{N}etwork (LPVDN). 

The idea of our method is to joint learn robust global and local structures in the latent space. The global structure modeling is based on the VaDE, which achieves good results in clustering with a Variational Autoencoder and Gaussian Mixture Model (GMM), however, it is vulnerable to the noise in the original input data space and fails to model the local structure between data points. In order to tackle these problems, we propose a novel model that not only improves the robustness of the global structure, but also encodes the local structure between data points. A sketch of our model is given in Fig. \ref{fig:model}.

More specifically, our methods can be explained using the following optimization objective:
\begin{small}
\begin{equation}
  \label{eq:overview}
  \begin{aligned}
   \min_{\phi, \theta, \psi, \rho} \sum_{i=1}^{N}{} & \big(L_\text{G}(x_i;\phi,\theta) \\
   &  +\alpha_{0}L_\text{MI}(x_i;\phi, \rho)+\alpha_{1}L_\text{LP}(x_i, x_j;\phi,\psi)\big) \\
  \end{aligned}
\end{equation} 
\end{small}
where $x_i$ and $x_j$ are input data points, $n$ is the total number of the data, $\phi$, $\theta$, $\psi$, $\rho$ are trainable parameters. 

As expressed in Eqn.\ref{eq:overview}, our model can be decomposed into three parts: $L_\text{G}(x_i;\phi,\theta)$ captures the global structure using a Variational Autoencoder with GMM Prior.
$L_\text{MI}(x_i;\phi, \rho)$ applies an embedding discriminator to improve the robustness of the global structure modeling.
Finally, $L_\text{LP}(x_i, x_j;\phi, \psi)$ models the local structure by minimizing the KL-divergence of the pairwise distance distribution. $\alpha_{0}$ and $\alpha_{1}$ are hyper-parameters which try to balance different parts. After the training phase, as the final latent representations comprise robust global and local information, we use it as the input for general clustering methods such as K-means or GMM.

\subsection{Global Structure Modeling $L_\text{G}(x_i;\phi,\theta)$}
To model the global structure, we take advantage of Variational Deep Embedding (VaDE) \citep{jiang2016variational}, which extends VAE with GMM to model the distribution of latent representations. 
Specifically, both VAE and VaDE maximize the log probability of the given data point:
\begin{equation}
\max_{\phi,\theta}\sum_{i=1}^{N}\log{p(x_i;\phi, \theta)}    
\end{equation}
As the optimization is non-trivial, an evidence lower bound (ELBO) \citep{kingma2013auto,jiang2016variational} is proposed and maximized instead. In order to understand the ELBO, we further decompose it as:
\begin{equation}
\begin{aligned}
\label{eq:vde_decompose}
   L_\text{ELBO}(x_i)= & E_{q(z,c|x_i)}[\log{p(x_i|z)}] \\
                 &-D_\text{KL}(q(z,c|x_i)||p(z,c))
\end{aligned}
\end{equation}
where $L_\text{ELBO}$ is the ELBO loss to be optimized. Term $x_i$ denotes the input data, $z$ represents the latent variables, $c$ is the GMM cluster in the latent layer. 
Eqn.\ref{eq:vde_decompose} shows that the ELBO loss consists of two terms. The first term corresponds to the reconstruction loss, which encourages the network to better approximate the original data. The second term is the regularization term, which requires the distribution of latent variables $z$ to be as close as Mixture-of-Gaussian distribution $p(\bm{z},c)$. Since ELBO preserves the global distribution of latent variables, our global structure term $L_\text{G}(x_i;\phi,\theta)$ is set to the negative ELBO loss (since we minimize the objective), precisely, $L_\text{G}(x_i;\phi,\theta)=-L_\text{ELBO}(x_i)$.

In practice, following the derivation of \citet{jiang2016variational}, $L_\text{G}(x_i;\phi,\theta)$ is computed as
\begin{small}
\begin{equation}
\begin{aligned}
&{}L_\text{G}(x_i;\phi,\theta) \\
= &  -\sum_{d=1}^{D}{x_i^d\log{\bm{\mu}_{x_i}|_d}}+(1-x_i^d)log(1-\bm{\mu}_{x_i}|_d)\\
& + \sum_{c=1}^{K}{\gamma_{ic}\sum_{j=1}^{J}{\big(\log{\bm{\sigma}_{c}^2|_j}}+\dfrac{\bm{\widetilde{\sigma}}_i^2|_j}{\bm{\sigma}_{c}^2|_j}+\dfrac{{(\bm{\widetilde{\mu}}_i|_j-\mu_c|_j)^2}}{\bm{\sigma}_{c}^2|_j}\big)}\\
&-\sum_{c=1}^{K}{\gamma_{ic}\log{\dfrac{\pi_{ic}}{\gamma_{ic}}}}-\dfrac{1}{2}\sum_{j=1}^{J}(1+\log{\bm{\widetilde{\sigma}}_i^2|_j})
\end{aligned}
\end{equation}
\end{small}
where $D$ is the dimension of input $x_i$ and reconstructed outputs $\bm{\mu}_{x_i}$. $x_i^d$ is the $d$-th element of $x_i$, $*|_j$ denotes the $j$-th element of $*$. $\gamma_{ic}$ represents $q(c|z_i)$, which is the GMM distribution for latent variable $z_i$. Moreover,
\begin{align}
     [\bm{\widetilde{\mu}}_i, \log\bm{\widetilde{\sigma}}_i^2] &= g_{g}(x_i; \phi) \label{eq:latent_mu}\\
     z_i &= \widetilde{\bm{\mu}}_i + \widetilde{\bm{\sigma}}_i\circ{\epsilon}_i \label{eq:z}\\
      \bm{\mu}_{x_i} &= f_{g}(z_i;\theta) 
\end{align}
where $g_{g}(*;\phi)$ and $f_{g}(*;\theta)$ are two neural networks parameterized by $\phi$ and $\theta$, $z_i$ is generated from the latent distribution $\mathcal{N}(\bm{\widetilde{\mu}}_i, \bm{\widetilde{\sigma}}_i ^2)$, $\epsilon_i$ is a vector with elements drawn from independent normal distribution and $\circ$ denotes the element-wise multiplication. $\bm{\mu}_{x_i}$ is the reconstructed data of $x_i$.

\subsection{Robust Embedding Discriminator $L_\text{MI}(x_i;\phi,\rho)$}

In Eqn.\ref{eq:vde_decompose}, it is essential to reconstruct the original data, which is achieved by minimizing the cross entropy loss or mean square error between raw data and the latent representations. However, such objectives are too strict, which would make the model vulnerable to the input noise. In order to improve the robustness of our model, we reformulate the reconstruction loss over all input data samples (for the simplicity, we omit the variable $c$),
\begin{small}
\begin{equation}\label{eq:elbo2mi}
  \begin{aligned}
     &E_{x\sim P(X)}\big[E_{q(z|x)}[\log(p(x|z))]\big]  \\
     = &\int_{}\int_{}p(x)q(z|x)\log(p(x|z))dzdx \\
     = &\int_{}\int_{}p(x)q(z|x)\log\big(\frac{q(z|x)p(x)}{q(z)}\big)dzdx \\
     = &\int_{}\int_{}p(x)q(z|x)\log\big(\frac{q(z|x)p(x)}{q(z)p(x)}\big)dzdx \\
        & + \int_{}\int_{}p(x)q(z|x)\log(p(x))dzdx \\
     = & D_\text{KL}(Q(Z|X)P(X)||Q(Z)P(X))-H(X) \\
     = & I(X, Z)-H(X)
  \end{aligned}
\end{equation}
\end{small}
where $X=\{x_1, ..., x_n\}$ represent the set of all the input samples, $Z$ represents the corresponding latent variables. Since $H(X)$ solely depends on the input data, the maximization of the first term in Eqn.\ref{eq:vde_decompose} is equivalent to maximize the mutual information $I(X,Z)$. 

To estimate the mutual information $I(X,Z)$, \citet{hjelm2018learning} introduce a JS-divergence based approximation,
\begin{equation}
\label{eq:JSD}
  I^\text{(JSD)}(X,Z)=D_\text{JS}(Q(Z|X)P(X),Q(Z)P(X))
\end{equation}

One way to compute the above JS-divergence in Eqn. \ref{eq:JSD} is using a discriminator $D$ \citep{nowozin2016f, hjelm2018learning,yang2019deep}:
\begin{equation}
\begin{aligned}
\label{discriminator}
  D_\text{JS}(P(X),Q(X)) & = \frac{1}{2}\max_D \{E_{x\sim P(X)}[\log (\sigma(D(x)))]\\
    &+E_{x\sim Q(X)}[\log (1-\sigma(D(x)))]\}\\
    &+\log 2
\end{aligned}
\end{equation}

Following Eqn. \ref{eq:JSD} and Eqn.\ref{discriminator},  we utilize a neural network to serve as the robust embedding discriminator,
\begin{equation}
\begin{aligned}\label{eq:LMI}
  &L_\text{MI}(x_i;\phi,\rho)=-\log[\sigma(D(x_i, z_i; \rho)]\\
  &-E_{(x_j,z_i)\sim p(z_i)p(x_j)}[\log (1-\sigma(D(x_j,z_i;\rho)))]
  \end{aligned}
\end{equation}
where $D(*;\rho)$ is a neural network parameterized by $\rho$. Term $z_i$ is the latent variable generated from $x_i$ by Eqn.\ref{eq:latent_mu} and Eqn.\ref{eq:z}. Discriminator $D(*;\rho)$ outputs a binary value indicating whether $z$ is generated from $x$. The second term in Eqn.\ref{eq:LMI} is negative sampling estimation to train the discriminator. In practice, we randomly draw samples from the datasets to form the negative pairs with $z_i$.\footnote{In the appendix, we further provide a theoretical proof for the effectiveness of the proposed robust discriminator.}

One thing should be noted is that $L_\text{MI}(x_i;\phi,\rho)$ cannot replace the original reconstruction loss in Eqn.\ref{eq:vde_decompose}. This is because Eqn.\ref{eq:JSD} is valid only when $P(X)$ and $Q(X)$ are close. As we maximize the mutual information $I(X,Z)$ during training, $P(X)$ and $Q(X)$ will be gradually varied and Eqn.\ref{eq:JSD} will no longer hold. In order to avoid this problem, we combine the reconstruction term and the robust embedding discriminator in our method to achieve better clustering performance (Eqn.\ref{eq:overview}).


\begin{table*}[ht]
\centering
\begin{tabular}{lllllll}
\toprule
\multirow{2}{*}{method} & \multicolumn{3}{c}{MNIST} & \multicolumn{3}{c}{Fashion-MNIST}  \\
& ACC & NMI & ARI & ACC & NMI & ARI  \\
\midrule
K-means      & 0.5850 & 0.4998 & 0.3652 & 0.5542 & 0.5117 & 0.3477      \\
AE+K-means   & 0.7147 & 0.6610 & 0.5593 & 0.6240 & 0.6360 & 0.4440    \\
VAE+K-means  & 0.8583 & 0.7432 & 0.7249 & 0.6049 & 0.5629 & 0.4082     \\
DEC \citep{xie2016unsupervised} & 0.8971 & 0.8631 & 0.8322 & 0.6303 & 0.6560 & 0.4987   \\
ClusterGAN \citep{mukherjee2019clustergan}  & 0.9596 & 0.9051 & 0.9134 & 0.6374 & 0.6423 & 0.5067      \\
VaDE \citep{jiang2016variational} & 0.9446* & 0.8761 & 0.8822 & 0.6383 & 0.6434 & 0.4955    \\
LPVDN (Proposed)    & \textbf{0.9716} & \textbf{0.9276} & \textbf{0.9385} & \textbf{0.6837} & \textbf{0.6912} & \textbf{0.5151}      \\
\bottomrule
\end{tabular}
\caption{Evaluation on vision datasets of MNIST and Fashion-MNIST for different algorithms using ACC, NMI and ARI. * is taken from \citep{jiang2016variational}, other results are generated by the released code.}
\label{tab:visual_result}
\end{table*}

\subsection{Local Structure Modeling $L_\text{LP}(x_i, x_j;\phi, \psi)$}

While the global structure models the latent space based on clustering distribution (e.g. GMM), such methods may generate unsatisfactory results for the data around the cluster boundary. To solve the problem, we introduce a local structure model that captures the pairwise relationships by minimizing the KL-divergence on the distance distribution. 

Inspired by \citet{maaten2008visualizing}, we define the following pairwise distance distribution $p_{ij}$ in the latent space:
\begin{equation}
    p_{j|i} = \frac{\exp(-\|\bm{\widetilde{\mu}}_i-\bm{\widetilde{\mu}}_j\|^2/2\eta_i^2)}{\sum_{k\not=i}\exp(-\|\bm{\widetilde{\mu}}_k-\bm{\widetilde{\mu}}_i\|^2/2\eta_i^2)}
\end{equation}
\begin{equation}
    p_{ij} = \frac{p_{j|i}+p_{i|j}}{2n}
\end{equation}
where $\widetilde{\bm{\mu}}_i$ and $\widetilde{\bm{\mu}}_j$ are the mean of the latent variables obtained from Eqn. \ref{eq:latent_mu}. Here, we select $\widetilde{\bm{\mu}}$ instead of $z$ because it contains much less noise. Term $\eta_i$ is a hyper-parameter adjusted according to the data density. One way to compute $\eta_i$ is from a pre-defined perplexity value:
\begin{equation}
    \text{Perplexity}(P_i) = 2^{-\sum_{j}p_{j|i}\log_{2}p_{j|i}}
\end{equation}

In order to encode the pairwise information, we apply a neural network to minimize the KL-divergence of the distance distribution. To be more precise, we first apply a locality mapping network that converts $\widetilde{\bm{\mu}}_i$ to $\bm{\widetilde{\bm{o}}}'_i$:
\begin{equation}
    \bm{\widetilde{\bm{o}}}'_i = f_{lp}(\bm{\widetilde{\mu}}_i; \psi)
\end{equation}
where $f_{lp}(*;\psi)$ is a neural network parameterized by $\psi$. Once $\bm{\widetilde{\bm{o}}}'_i$ is obtained, the pairwise distribution $q_{ij}$ between $\bm{\widetilde{\bm{o}}}'_i$ and $\bm{\widetilde{\bm{o}}}'_j$ is defined using a student t-distribution with one degree of freedom,
\begin{equation}
    q_{ij} = \frac{(1+||\bm{\widetilde{\bm{o}}}'_i-\bm{\widetilde{\bm{o}}}'_j||^{2})^{-1}}{\sum_k{\sum_{k\not=l}(1+||\bm{\widetilde{\bm{o}}}'_k-\bm{\widetilde{\bm{o}}}'_l||^{2})^{-1}}}
\end{equation}
t-distribution is applied due to its computational efficiency and its property to mitigate crowding problems \citep{maaten2008visualizing}. 

Finally, the local structure modeling $L_\text{LP}(x_i, x_j;\phi,\psi)$ is given by measuring the KL-divergence between the two joint probabilities $p_{ij}$ and $q_{ij}$:
\begin{equation}
    L_\text{LP}(x_i, x_j;\phi,\psi) = \sum_{i\not=j}p_{ij}\log\frac{p_{ij}}{q_{ij}}
\end{equation}
An intuitive interpretation for $L_\text{LP}(x_i;\phi,\psi)$ is that it encourages $\bm{\widetilde{\bm{o}}}'_i$ and $\bm{\widetilde{\bm{o}}}'_j$ to be closer when the pairwise distance of $\widetilde{\bm{\mu}}_i$ and $\widetilde{\bm{\mu}}_j$ are small, and vice versa.
As a result, the final output of $\bm{\widetilde{\bm{o}}}'$ not only inherits the global structure information from $\bm{\widetilde{\mu}}$ but also encodes local structure information from the pairwise relationship, thus can be used as an effective representation for clustering.


\section{Experiments}

\begin{table*}[ht]
\centering
\begin{tabular}{lllllll}
\toprule
\multirow{2}{*}{method} & \multicolumn{3}{c}{REUTERS-10k} & \multicolumn{3}{c}{REUTERS} \\
& ACC & NMI & ARI & ACC & NMI & ARI \\
\midrule
K-means      & 0.5403 & 0.4175 & 0.2775 & 0.5328 & 0.4014 & 0.2631     \\
AE+K-means   & 0.7477 & 0.4714 & 0.5300 & 0.7006 & 0.3330 & 0.3394    \\
VAE+K-means  & 0.7421 & 0.4319 & 0.5073 & 0.6922 & 0.3586 & 0.3317     \\
DEC \citep{xie2016unsupervised} & 0.7229 & 0.5229 & 0.5525 & 0.7082 & 0.4915 & 0.5387     \\
ClusterGAN \citep{mukherjee2019clustergan}  & 0.4317 & 0.0271 & 0.0163 & 0.4621 & 0.0303 & 0.0411      \\
VaDE \citep{jiang2016variational} & 0.8092 & 0.5267 & 0.5951 & 0.7938* & 0.5591 & 0.5779      \\
LPVDN (Proposed)   & \textbf{0.8301} & \textbf{0.5857} & \textbf{0.6112} & \textbf{0.8483} & \textbf{0.6553} & \textbf{0.6806}      \\
\bottomrule
\end{tabular}
\caption{Evaluation on textual datasets of REUTERS-10k and REUTERS for different algorithms using ACC, NMI and ARI. * is taken from \citep{jiang2016variational}, other results are generated by the released code.}
\label{tab:textual_result}
\end{table*}

In this section, extensive experiments are conducted to validate the effectiveness of the proposed LPVDN.

\subsection{Dataset Description}
To test the ability of our model to solve general clustering tasks, we do experiments on both vision and textual datasets.

\subsubsection{Vision Datasets:}
\begin{itemize}
    \item MNIST is a widely known dataset containing 70,000 (60,000/10,000) handwritten gray-scale images. There are 10 classes and each image size is 28 $\times$ 28 pixels.
    
    \item Fashion-MNIST: Fashion-MNIST is a dataset similar to MNIST with the same number of samples and image sizes. Instead of handwritten digits, Fashion-MNIST consists of fashion goods and products which yields a more challenging task. 
\end{itemize}

\subsubsection{Textual Datasets:}
\begin{itemize}
    \item REUTERS: The original REUTERS dataset contains about 810,000 English news stories labeled with categories. Following \citet{huang2014deep}, we use four root categories during the evaluation, namely, corporate/industrial, government/social, markets, and economics. The documents with multiple root categories are pruned, resulting in a dataset of size 685,071. 
    \item REUTERS-10k: REUTERS-10k is a subset of REUTERS.
    Using a similar approach as \citet{huang2014deep}, we adopt a small scale dataset for REUTERS by sampling a subset of 10,000 examples from REUTERS.
\end{itemize}

\begin{figure}[ht]
    \centering
    \begin{minipage}[t]{0.4\linewidth}
    \centering
    \includegraphics[width=1.4in]{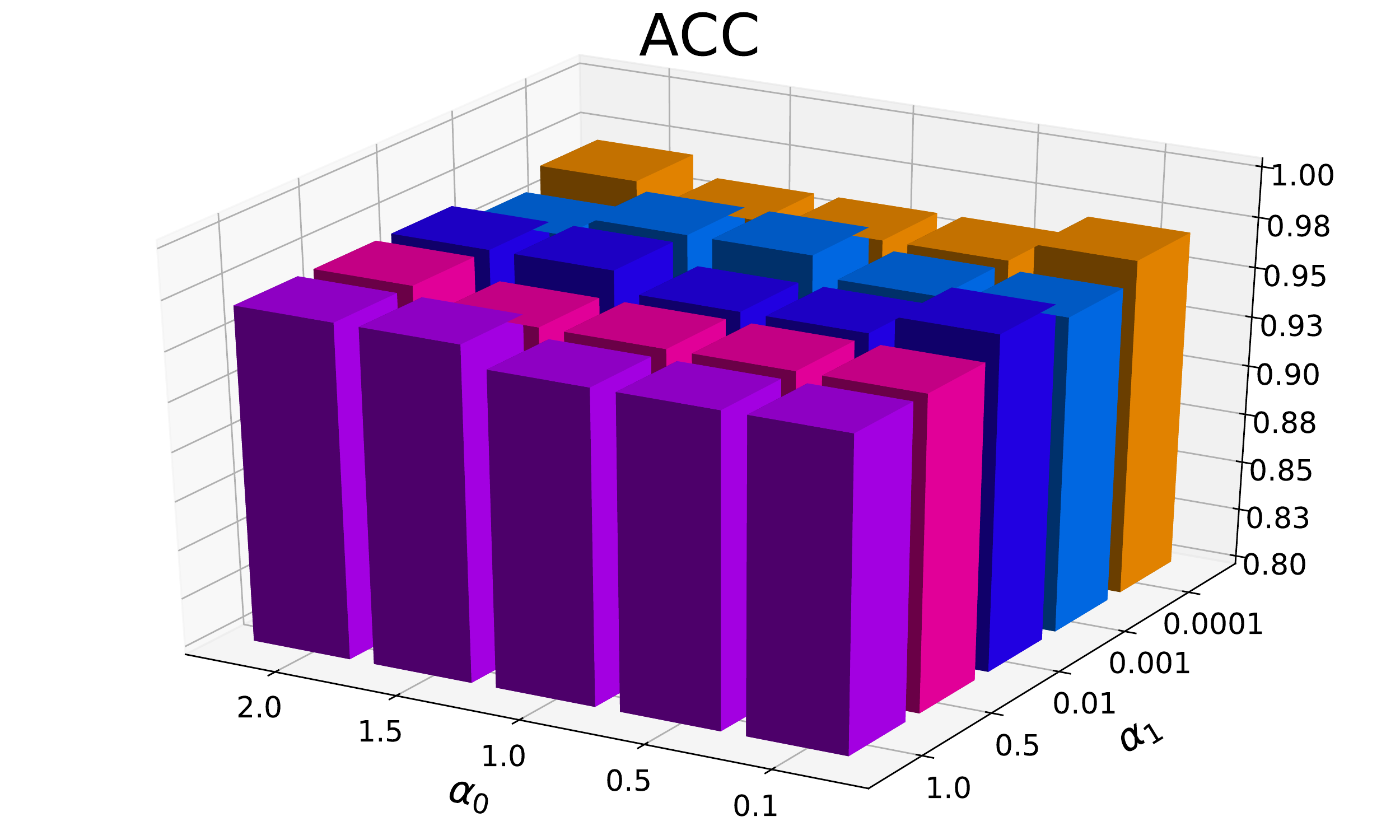}
    \label{fig:ACC}
    \end{minipage}

    \centering
    \begin{minipage}[t]{0.4\linewidth}
    \centering
    \includegraphics[width=1.4in]{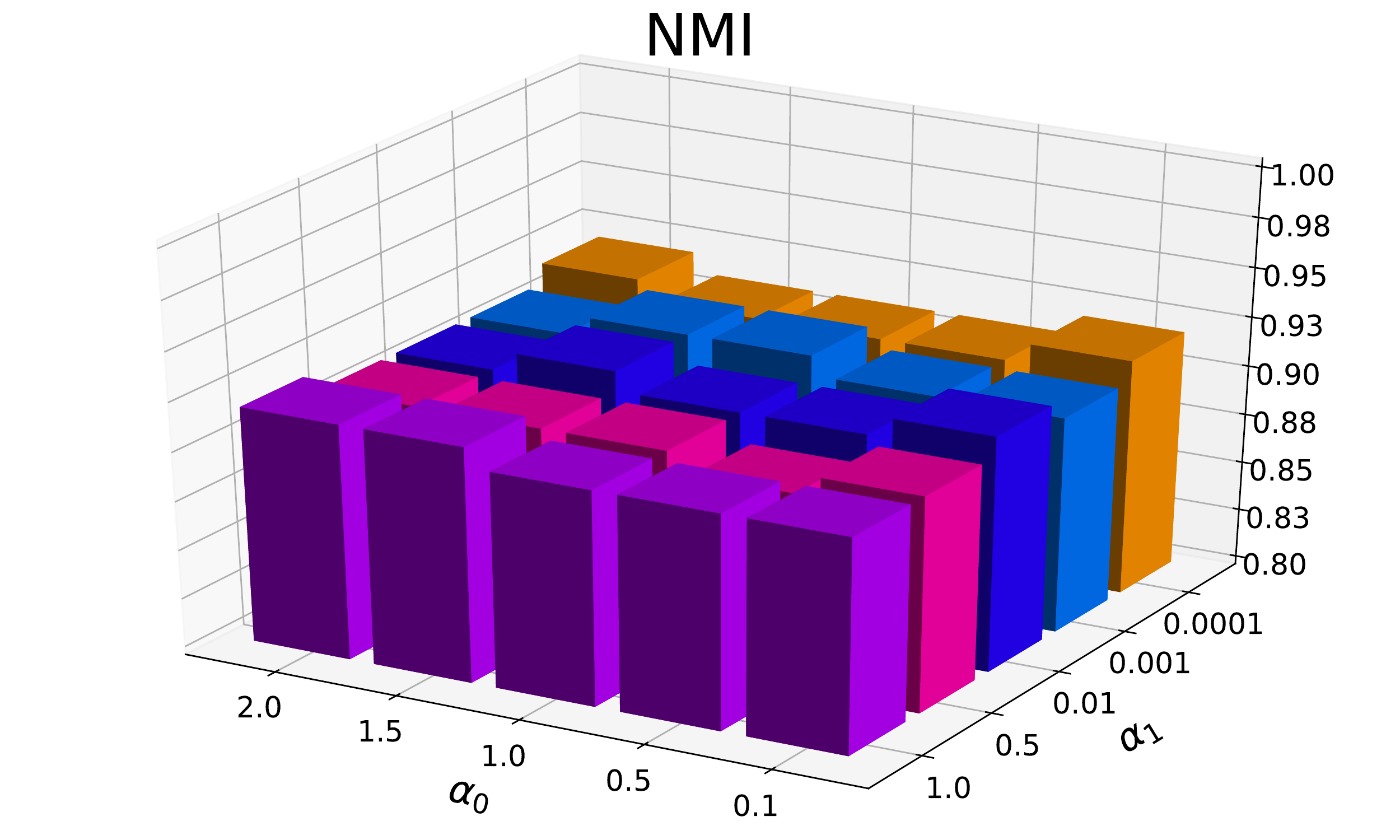}
    \label{fig:NMI}
    \end{minipage}
    \hspace{0.2in}
    \begin{minipage}[t]{0.4\linewidth}
    \centering
    \includegraphics[width=1.4in]{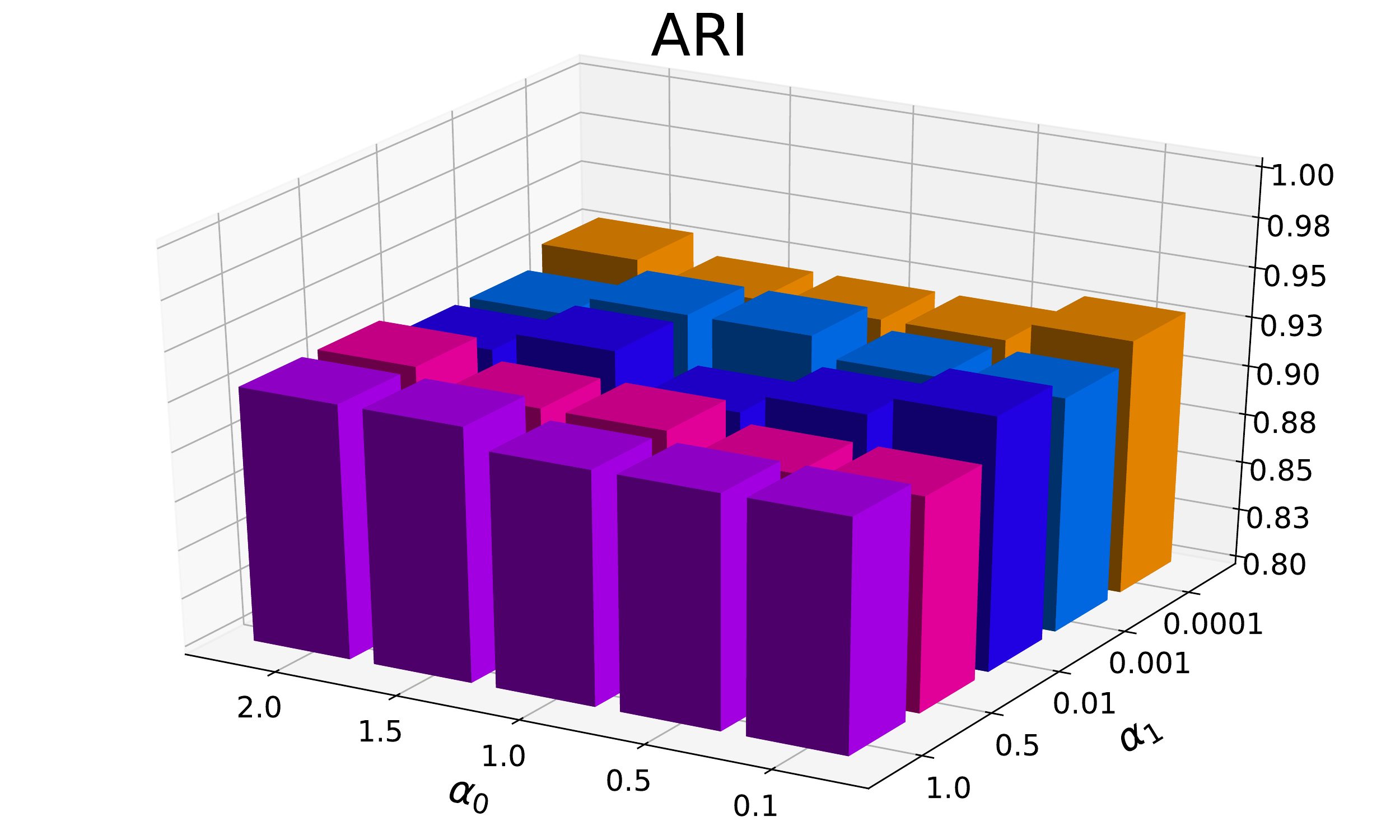}
    \label{fig:ARI}
    \end{minipage}

    \caption{ACC, NMI and ARI of the proposed LPVDN with different $\alpha_0$ and $\alpha_1$ combination on the MNIST dataset. The performance is stable with different hyper-parameters.}
    \label{fig:hyper}
\end{figure}

\subsection{Experiment Setting}

For both vision and textual datasets, we combine the training and test sets together because of the unsupervised setting. Furthermore, for vision data, we concatenate the pixels to form the input vector without preprocessing. Following the settings of \citet{xie2016unsupervised, jiang2016variational}, we set the encoder of global structure as $D-500-500-2000-10$  and decoder as $10-2000-500-500-D$, where $D$ is the input dimension. The robust discriminator uses the structure of $(10+D)-256-1$, where $10$ is the dimension of the latent representation. For the local structure model, we adopt the network with $10-256-256-256-10$. All layers are fully connected using \textit{ReLU} as the activation function. 
%
We set $\alpha_0=1$ and $\alpha_1=10^{-4}$ in MNIST and Fashion-MNIST and $\alpha_0=1$ and $\alpha_1=10^{-3}$ in REUTERS-10k and REUTERS. The batch size is 1,000. We use Adam optimizer \citep{KingmaB14}. We apply K-means on the latent representations (i.e. $\bm{\widetilde{\bm{o}}}'$) to do clustering.

\begin{table*}[t]
\centering
\begin{tabular}{lrrrrrr}
\toprule
\multirow{2}{*}{method} & \multicolumn{3}{c}{MNIST} & \multicolumn{3}{c}{REUTERS} \\
 & ACC & NMI & ARI & ACC & NMI & ARI \\
\midrule
$L_\text{G}$+$L_\text{MI}$+$L_\text{LP}$  & \textbf{0.9716} & \textbf{0.9276} & \textbf{0.9385} & \textbf{0.8483} & \textbf{0.6553} & \textbf{0.6806}  \\
$L_\text{G}$+$L_\text{LP}$ & 0.9710 & 0.9269 & 0.9373 & 0.8373 & 0.6542 & 0.6762 \\
$L_\text{G}$+$L_\text{MI}$ & 0.9477 & 0.8910 & 0.8888 & 0.8224 & 0.5969 & 0.6364 \\
$L_\text{G}$ only       & 0.9446 & 0.8761 & 0.8822 & 0.7938 & 0.5591 & 0.5779    \\

\bottomrule
\end{tabular}
\caption{Ablation study of the proposed LPVDN on MNIST and REUTERS. Both robust embedding discriminator $L_\text{MI}$ and local structure modeling $L_\text{LP}$ contribute to the improvement compared with only using global structure modeling $L_\text{G}$}
\label{tab:ablation}
\end{table*}

\subsection{Baseline Methods}
We compare the results of our model with several state-of-the-art baseline models. All the K-means algorithms mentioned below use 10 different K-means++ initialization \citep{arthur2007k} and finally select the settings with the lowest average distances. 
\begin{itemize}
    \item K-means: apply the K-means algorithm to the original data.
    \item AE + K-means: A pipeline method that trains an Autoencoder first and then applies the K-means algorithm to the latent representations produced by the Autoencoder. 
    \item VAE + K-means: A pipeline method that trains a Variational Autoencoder first and then applies K-means to the latent representations. 
    \item DEC \citep{xie2016unsupervised}: Following the experiment in the paper, DEC is first pre-trained with the autoencoder setting, and then iteratively optimized by minimizing the KL-divergence between the distribution of the current soft cluster assignment and its proposed auxiliary target distribution.
    \item ClusterGAN \citep{mukherjee2019clustergan}: introduces a variation of GAN with an inverse-mapping network and a clustering-specific loss, resulting in superior results among GAN-based algorithms in the clustering task.
    \item VaDE \citep{jiang2016variational}: a generative clustering framework combining VAE with GMM prior. It is reported that VaDE achieves the best performance compared with other general AE/VAE based clustering methods. 
    
\end{itemize}
The settings of the encoder and decoder in AE and VAE are all the same as our proposed model.

\subsection{Evaluation Metrics}

While evaluating the real performance of clustering is arguably a non-trivial problem, we follow the mainstream practices and use the unsupervised clustering accuracy (ACC), Normalized Mutual Information (NMI) and Adjusted Rand Index (ARI) as the metrics to evaluate the models. In our experiments, we set the number of clusters the same as the ground-truth labels. 

\subsubsection{ACC}
The clustering accuracy(ACC) is defined as:
\begin{equation}
    \text{ACC} = \max\limits_{m} \frac{\sum_{i=1}^{n}\textbf{1}\{l_i=m(c_i)\}}{n}
\end{equation}
where $l_i$ denotes the ground-truth label and $c_i$ is the output label of the clustering algorithm. The term $m$ ranges over all possible one-to-one mappings between the cluster and label.

\subsubsection{NMI}
Normalized Mutual Information (NMI) computes the normalized measure of similarity between two labels of the same data, which is defined as:
\begin{equation}
    \text{NMI}(U, V) = \frac{2\times \text{MI}(U, V)}{H(U) + H(V)}
\end{equation}
where $H(U)$ and $H(V)$ are the entropy of the label assignments $U$ and $V$, respectively. $\text{MI}(U, V)$ is the mutual information of the two label assignments $U$ and $V$.


\subsubsection{ARI}
The unadjusted Rand Index (RI) is a measure of the similarity between two data clusters, which is defined as
$
    (a+b)/(_{2}^{n})
$,
where $n$ is the number of samples, $a$ is the number of pairs that are divided into the same set in both ground-truth labels and output labels, $b$ is the number of pairs that are divided into different sets in both ground-truth labels and output labels.
Since RI score does not guarantee that random labels can get a score close to $0$, the Adjusted Rand Index (\textbf{ARI}) is proposed and defined as:
\begin{equation}
    \text{ARI}=\frac{\text{RI}-{E}[\text{RI}]}{\max(\text{RI})-{E}[\text{RI}]}
\end{equation}
where $E[\text{RI}]$ is the expected RI of random labeling.

\subsection{Experiment Result}
In this section, we explain the experiment results in details on vision and textual datasets.
\subsubsection{Vision Datasets:}
Table \ref{tab:visual_result} shows the results for the vision datasets, MNIST and fashion-MNIST. 

From the experiment results, we have the following observations. 1) DEC performs better than the simple combination of AE+K-means. This means that modeling the global structure during the dimension reduction is helpful to improve the clustering performance. 
2) generative methods such as VaDE and ClusterGAN achieve better results compared with discriminative methods such as DEC. This is owing to their ability to model the underlying distribution of the data instead of recovering the exact input. 
3) our method outperforms ClusterGAN and VaDE by a significant margin, this is because our model jointly learns a more robust global and local data structure, while ClusterGAN and VaDE only model the global structures.

Furthermore, we show the clustering performance by varying the hyper-parameters $\alpha_0$ and $\alpha_1$ on the MNIST dataset (in Fig. \ref{fig:hyper}). 
It can be seen that the performance of our proposed method is relatively stable in terms of different combinations of hyper-parameters $\alpha_0$ and $\alpha_1$.


\subsubsection{Textual Results:}
Table \ref{tab:textual_result} shows the results on the textual datasets. Our proposed method achieves the best results on all datasets for all the metrics. This further verifies the effectiveness of our model for non-image tasks. 
Notably, ClusterGAN  performs badly on textual datasets. As discussed in the related works, this is due to the fact that ClusterGAN is sensitive to the unbalanced data, which is the case for REUTERS. As a comparison, our method makes no assumptions on the prior distribution of the datasets, thus is not influenced by such unbalance effects.

\subsection{Ablation Study}
In Table \ref{tab:ablation}, we show the results of the ablation study on both MNIST and REUTERS datasets. As our model contains three parts ($L_\text{G}$,$L_\text{MI}$,$L_\text{LP}$),  we design the experiment to study the contribution of each component. We use the method that only models the global structure $L_G$ as the baseline to compare, which is almost equivalent to VaDE. From the experiment results, it can be seen that the local structure component ($L_\text{LP}$) is crucial and improves the clustering performance significantly, while the robust embedding discriminator $L_\text{MI}$ only improves the performance by a slight margin. This result is in-line with our expectation because the embedding discriminator is better at improving the robustness of the system rather than the accuracy.

\begin{figure}[t]
    \centering
    \subfigure{
    \begin{minipage}[t]{0.5\linewidth}
    \centering
    \includegraphics[width=1.7in]{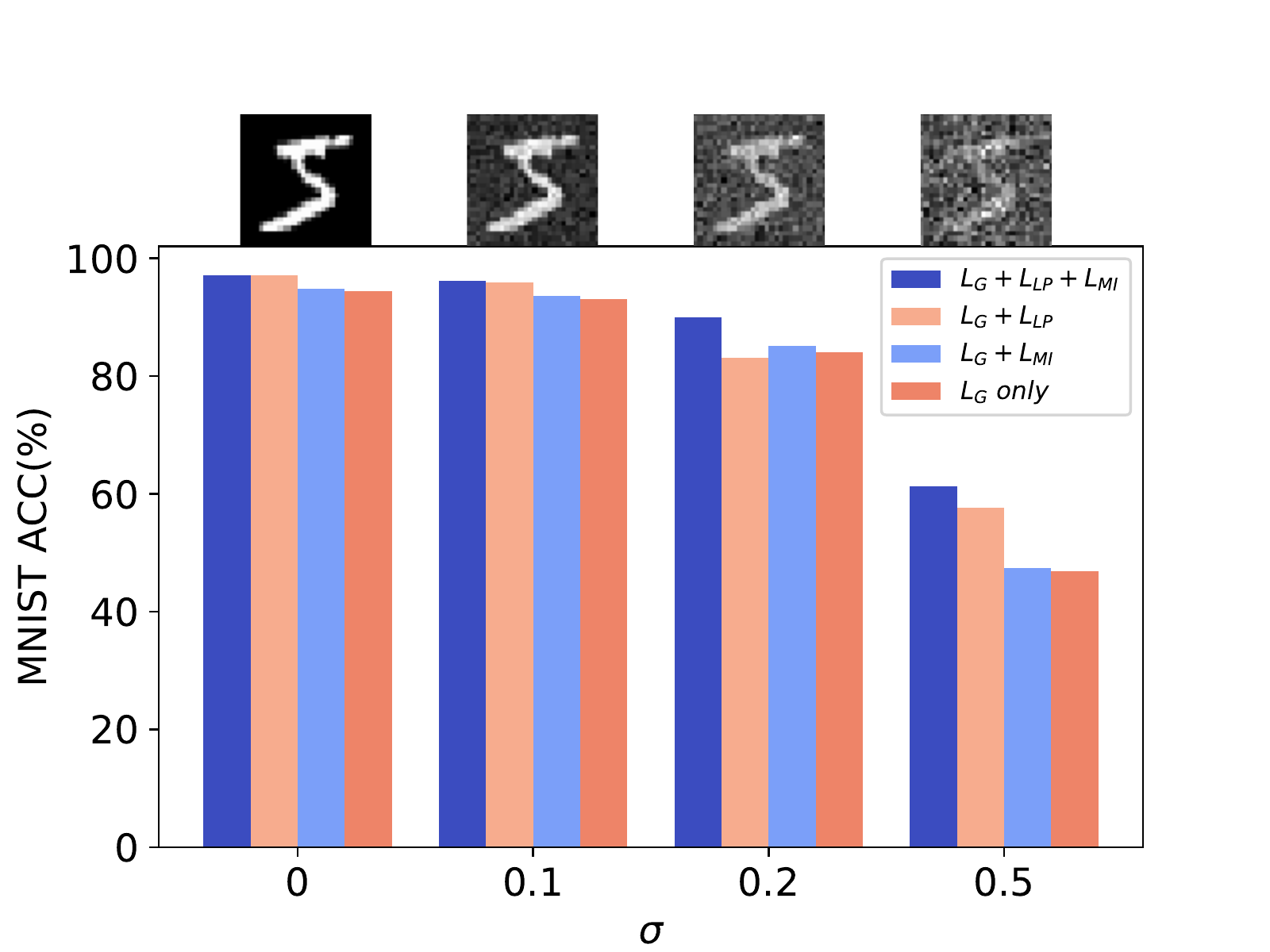}
    \end{minipage}
    }%
    \subfigure{
    \begin{minipage}[t]{0.5\linewidth}
    \centering
    \includegraphics[width=1.7in]{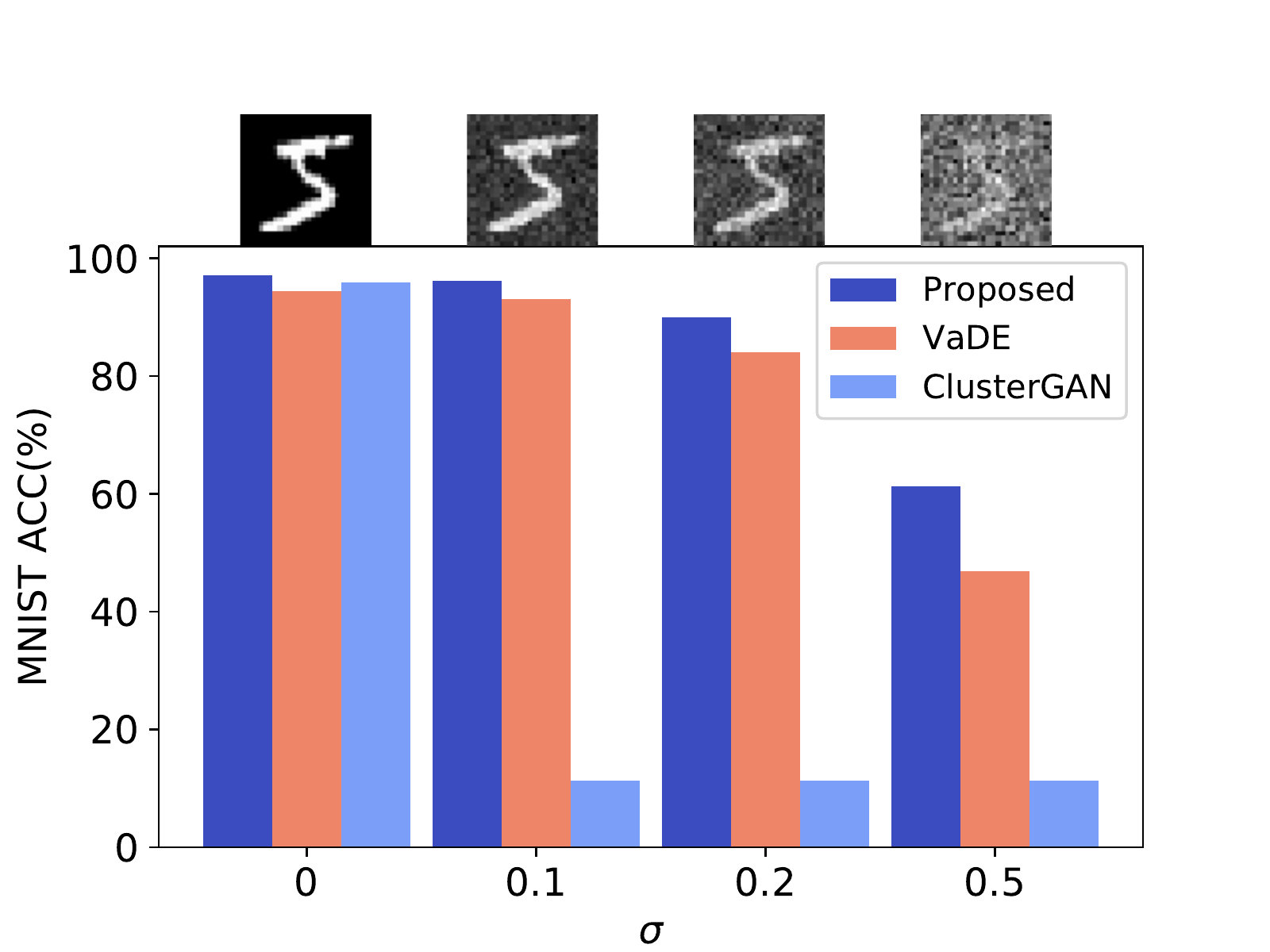}
    \end{minipage}
    }%
    \caption{Accuracy on MNIST datasets corrupted by Gaussian Noise($\sigma$). The sample images with the Gaussian noise are shown above.}\label{fig:noise}
\end{figure}

\subsection{Robustness under Input Pollution}

To testify the robustness of our model, we do experiments on the MNIST dataset following \citet{ford2019adversarial}. Particularly, we gradually corrupt the pixels in the input images with independent Gaussian Noise ($N(0,\sigma^2)$).
In Fig.\ref{fig:noise}, we show that the proposed robust embedding discriminator ($L_{MI}$) performs significantly better than the methods only using the global structure ($L_G$ only). In addition, the experiment results also show that the local structure ($L_{LP}$) contributes to the robustness, especially when the image is seriously corrupted. We can also observe that VaDE and ClusterGAN are fragile to the input noise, especially ClusterGAN, which collapses even under small pollution ratio.

\begin{figure}[t]
    \centering
    \includegraphics[width=1.8in]{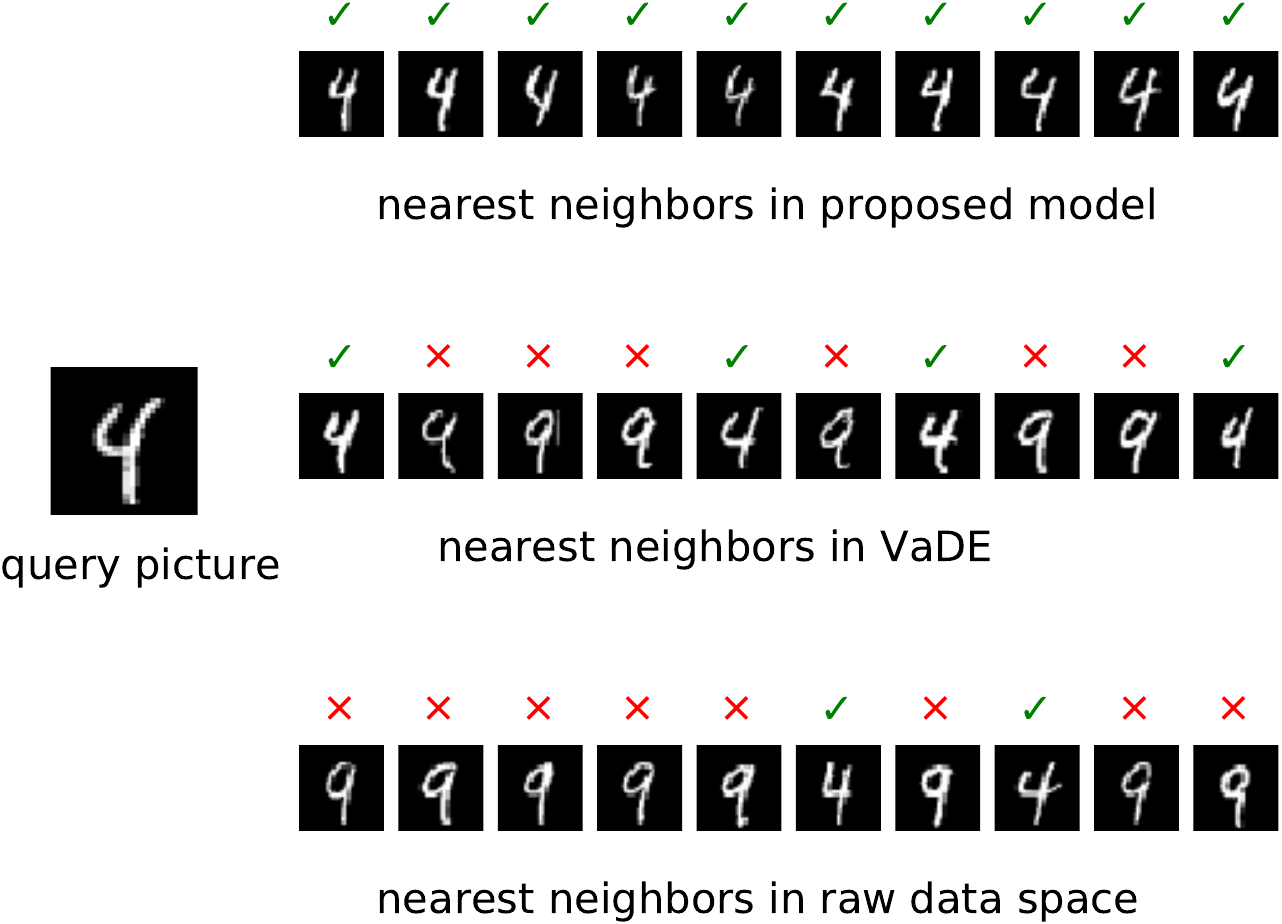}
    \caption{The 10 nearest neighbors retrieved using our proposed model, VaDE and raw data  on MNIST dataset. The picture on the left is the query picture of digit ``4'', which looks similar to ``9''.
    }\label{fig:case}
\end{figure}
\subsection{Case Study}

In Fig. \ref{fig:case}, we show a concrete example by performing an image retrieval task using the latent representations. In particular, we use the representation of an image as the query to retrieve top $K$ images based on euclidean distances.
Specifically, in the task we extract digit ``4'' form the dataset (this is a difficult example as it looks like ``9''). The top 10 nearest neighbors found by our model are all correct, while the ones obtained using VaDE are a mixture of ``4'' and ``9''. This is because the global structure modeling usually focuses on the cluster centroids and modeling the data distribution around those centroids, thus it is difficult to differentiate the data around the cluster boundary. On the other hand, since our model encodes the local pairwise relationships, it can incorporate the information of the boundary points, therefore performs better under such situations. 
\begin{figure}[t]
    \centering
    \includegraphics[width=2in]{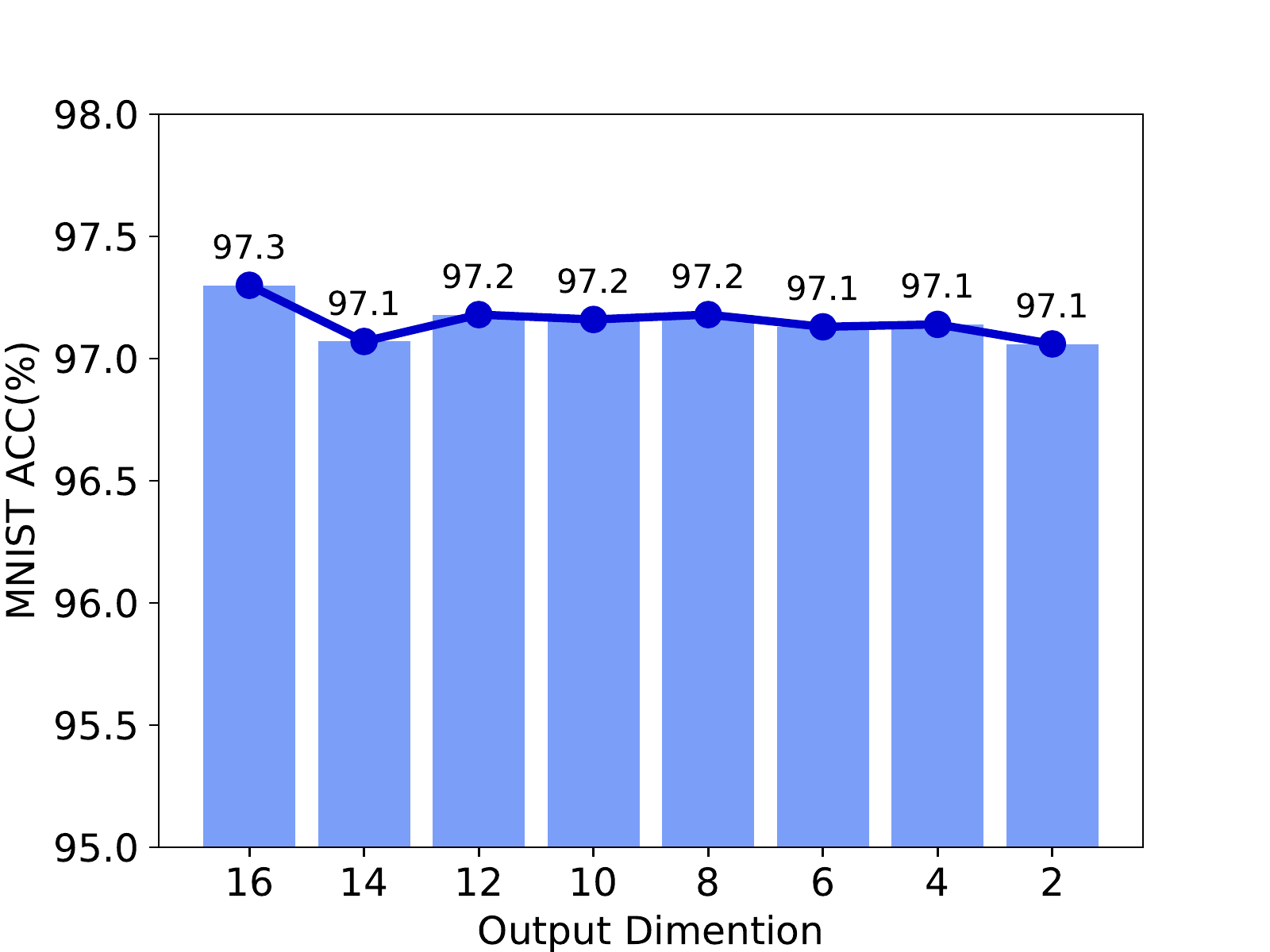}
    \caption{Accuracy on MNIST dataset with different embedding dimension sizes. Our method performs well even when the output dimension is reduced to two, which is a desirable property for data visualization.}\label{fig:case_dim}
\end{figure}

\subsection{Visualization and Dimension Reduction}

\begin{figure}[t]
    \centering
    \subfigure{
    \begin{minipage}[t]{0.4\linewidth}
    \centering
    \includegraphics[width=1.35in]{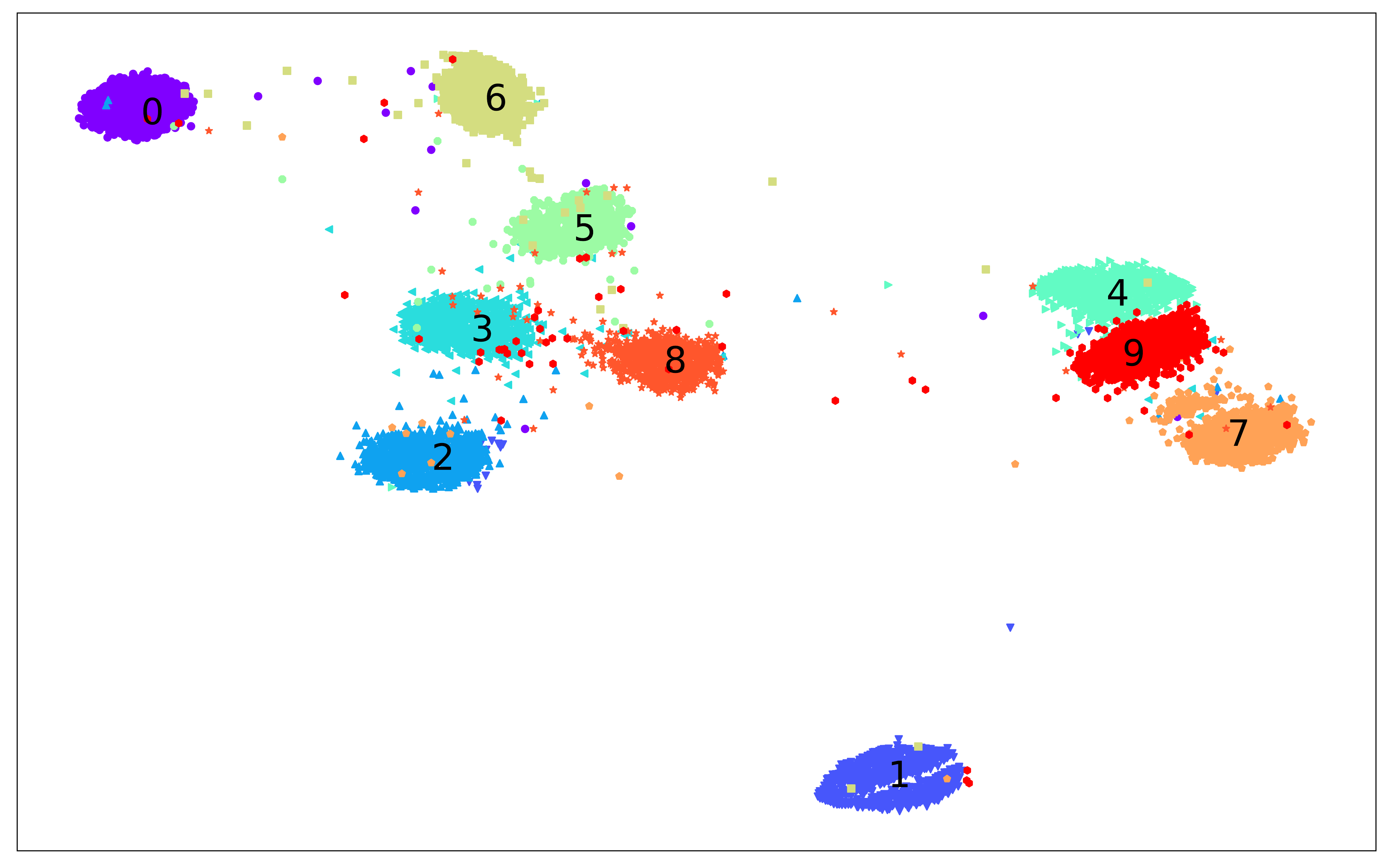}
    \end{minipage}
    }%
    \subfigure{
    \begin{minipage}[t]{0.4\linewidth}
    \centering
    \includegraphics[width=1.35in]{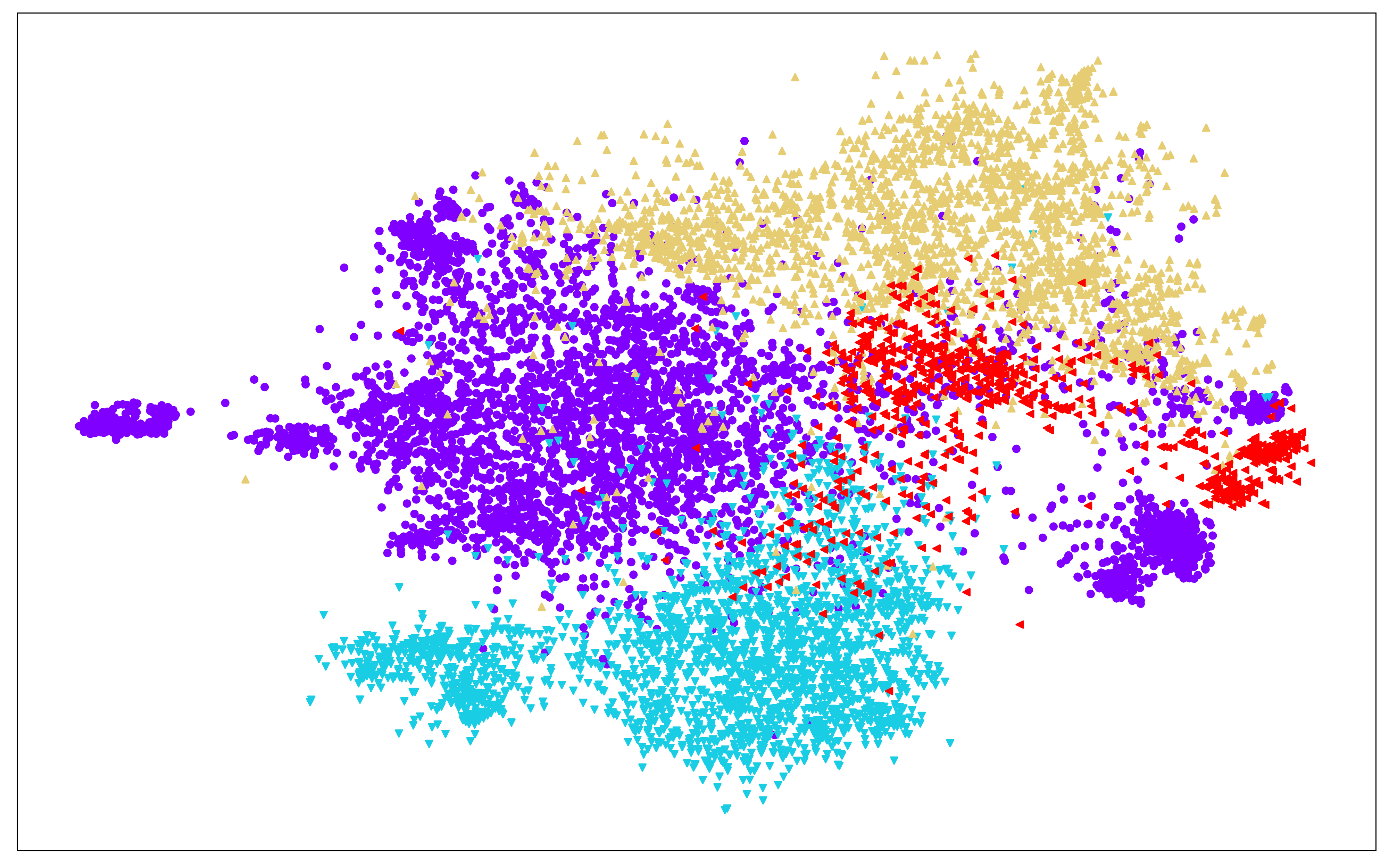}
    \end{minipage}
    }%

    \caption{Visualization on the MNIST test  dataset (left) and REUTERS-10k dataset (right) by setting the dimension of latent representations to two. The proposed LPVDN shows clear boundary among clusters. }
    \label{fig:visualization}
\end{figure}

In Fig. \ref{fig:case_dim} we examine the accuracy of our model on MNIST by adjusting the embedding dimension. It can be observed that the results are quite stable with various dimensions. One interesting observation is that the cluster accuracy is almost kept when we reduce the dimension to \textbf{2}, which makes our model promising to be served as a data visualization tool. 
In Fig. \ref{fig:visualization} we show the visualization for the latent representations on the test datasets of MNIST and REUTERS-10k, which all contain 10,000 samples. Different colors represent the gold labels. 
It shows that our proposed LPVDN creates a cohesive structure for the same clusters, while clearly separates the latent representations of different clusters.

\section{Conclusion}
In this paper, we propose a locality preserving variational discriminative network to solve the clustering problem. Because our model can capture robust global and local data structures in the latent layer, it outperforms the state-of-the-art baselines in all of the metrics for the vision and textual benchmark datasets. Moreover, our model exhibits stable clustering performance even when the dimension is reduced to 2, which means our model is a promising candidate for other unsupervised applications such as dimension reduction and data visualization.

\bibliography{aaai21}

\onecolumn
\appendix

\section{Appendix}

\subsection{Theoretical Analysis of $\mathcal{L}_\text{MI}$}

\label{sec:TheoreticalMI}

In this section, we will conduct a theoretical analysis for the global robust term $\mathcal{L}_\text{MI}$. First, we will show the relation between $\mathcal{L}_\text{MI}$ and the margin loss. Utilizing the relation between them, we replace the initial loss function with the margin loss, which can be written in the form of the penalty function. Second, We will explain the effects of $L_{MI}$ from the point of view of the penalty function. Last, we will explain how $\mathcal{L}_\text{MI}$ improve the robustness of the model. 

\subsubsection{Relation between  $\mathcal{L}_\text{MI}$ and the margin loss}

In this subsection, we will analysis the relation between $\mathcal{L}_\text{MI}$ and the margin loss.

The form of $\mathcal{L}_\text{MI}$ can be written as:
\begin{align}
    \mathcal{L}_{\text{MI}}&=-\mathbb{E}_{x, z\sim q(z|x)p(x)}[\log(\sigma(D(x, z))]-\mathbb{E}_{x, z\sim q(z)p(x)}[\log(1-\sigma(D(x, z))]\\
    &= \mathbb{E}_{x, z\sim q(z|x)p(x)}[\log(1+\exp(-D(x, z)))]+\mathbb{E}_{x, z\sim q(z)p(x)}[\log(1+\exp(D(x, z)))]
\end{align}

The function $f(t)=\log(1+\exp(t))$ can be approximated by a margin loss $g(t)=[t+\gamma]_+=\max(0, t+\gamma) (\gamma \ge 0)$, which is derivable almost everywhere. When the model is well-learned, namely the loss function $f(t)$ is small enough, their Chebyshev distance $D$ and their gradients' Chebyshev distance are 
\begin{align}
    D=\inf_{t<-\gamma}|f(t)-g(t)|=\log(1 + \exp(-\gamma)); \quad D'=\inf_{t<-\gamma}|f'(t)-g'(t)|=\frac{\exp(-\gamma)}{1+\exp(-\gamma)}
\end{align}

Therefore, we can choose a appropriate $\gamma$ to ensure that if we replace $f(t)$ with $g(t)$, the gradients and well-learned loss values are precise enough. For example, when $\gamma=3$, $D=\log(1+e^{-3})=0.049<0.05, D'=e^{-3}/(1+e^{-3})=0.047<0.05$, the well-learned loss values and gradients have bounded errors less than $0.05$.

\subsubsection{Equivalence between  $\mathcal{L}_\text{MI}$ and the penalty function}

In this subsection, we will analysis the equivalence between $\mathcal{L}_\text{MI}$ and the penalty function under certain circumstances, which improve the robustness of the model.

After replacing $f(t)$ with $g(t)$, 
\begin{align}
\mathcal{L}_{\text{MI}} = \mathbb{E}_{ q(z|x)p(x)}[f(-D(x, z))]+\mathbb{E}_{ q(z)p(x)}[f(D(x, z))]
\approx \mathbb{E}_{ q(z|x)p(x)}[[\gamma-D(x, z)]_+]+\mathbb{E}_{ q(z)p(x)}[[\gamma+D(x, z)]_+]
\end{align}

Let $x_i, z_i\sim q(z|x)p(x) (1\le i\le n)$ be the positive samples and $x_j, z_j\sim q(z)p(x) (n+1\le j \le n+m)$ be the negative samples for training $\mathcal{L}_\text{MI}$, when  minimizing the total loss function $\mathcal{L}+\lambda \mathcal{L}_\text{MI}$, where $\mathcal{L}$ is the sum of other loss functions, the optimization form can be written as:
\begin{align}
    \min\quad  \mathcal{L}+\alpha{\sum\limits_{i=1}^{n}[\gamma-D(x_i, z_i)]_++\beta{\sum\limits_{j=n+1}^{n+m}}[\gamma+D(x_j, z_j)]_+};
\end{align}
which is equivalent to the following optimization under certain circumstances with the point of the penalty function,
\begin{align}
\min&\quad \mathcal{L}\\
s.t.&\quad [\gamma-D(x_i, z_i)]_+=0; (1\le i\le n)  \\
&\quad [\gamma+D(x_j, z_j)]_+=0; (n+1\le j \le n+m)
\end{align}

Therefore, for $x, z\sim q(z|x)p(x)$, $D(x, z)>\gamma$ holds with high probability, and for $x,z\sim q(z)p(x)$, $D(x, z)<-\gamma$ holds with high probability.

\subsubsection{How $\mathcal{L}_\text{MI}$ improves the robustness of the model}

In this subsection, we will show how $\mathcal{L}_\text{MI}$ improves the robustness of the model.

First, we define the rationality of hidden state $z$. Define $X$ as the manifold of real $x$, with a well trained discriminator $D$, $D(x, z)>0$ is the rational region for distribution $q(z|x)p(x)$. With the point of the penalty function, we have 
\begin{align}
\alpha(x)=\int\limits_{D(x,z)>0}q(z|x)dz \approx \int\limits_{D(x,z)>\gamma}q(z|x)dz  
\end{align}
So for $x$ in manifold $X$, $\alpha(x) \approx 1$ if $D$ is well trained. And $\alpha(x) \approx 1$ infers that the hidden state generated by $x$ is rational.

Define $x'=x+\Delta x, x\in X$ being the data out of manifold $X$, when we use a continuous function $q(z|x)$, we will prove that the generated hidden state $z'$ is still rationality with high probability when $\Delta x$ is small. 
\begin{align}
    \alpha (x+\Delta x)= \int\limits_{D(x+\Delta x,z)>0}q(z|x+\Delta x)dz = \int\limits_{D(x,z)+\nabla_xD\cdot\Delta x>0}q(z|x)dz + \int\limits_{D(x,z)>0}\nabla_x[q(z|x)]\cdot\Delta x dz + o(\|\Delta x\|_2)
\end{align}

where high-order infinitesimals are ignored, and we will show in Corollary~\ref{cor:1} that $\int\limits_{D(x,z)>0}\nabla_x[q(z|x)]\cdot\Delta x dz$ can also be ignored for some certain distributions.

Therefore, we only need to consider the first term. Define $m=-\nabla_xD\cdot\Delta x$, when $m>0$, $\alpha (x')$ tends to get worse.
\begin{align}
\alpha(x+\Delta x)= \int\limits_{D(x,z)>0}q(z|x)dz - \int\limits_{D(x,z)\in(0, m)}q(z|x)dz+o(\|\Delta x\|_2) =\alpha(x)-\int\limits_{D(x,z)\in(0, m)}q(z|x)dz+o(\|\Delta x\|_2)
\end{align}
When $m<\gamma$, $\mathcal{L}_\text{MI}$ makes $D(x,z)$ almost impossible to be less than $\gamma$, therefore $\int\limits_{D(x,z)\in(0, m)}q(z|x)dz$ is almost $0$. And $\alpha(x+\Delta x)$ tends to decrease without $\mathcal{L}_\text{MI}$ because $\int\limits_{D(x,z)\in(0, m)}q(z|x)dz$ tends to be greater than $0$. 

To conclude, $\mathcal{L}_\text{MI}$ improves the robustness of the model.

\subsubsection{Corollary~\ref{cor:1} and its proof}
\begin{cor}
\label{cor:1}
For $q(z|x)=\frac{1}{(2\pi)^{d_z/2}\prod_{i=0}^{d_z-1}\sigma_i}\exp{[-\frac{1}{2}\sum_{i=0}^{d_z-1}\frac{(z_i-\mu_i(x))^2}{\sigma_i^2}]}$, where $\mu_i(x) (i=0,1,\cdots,d_z-1)$ is a smooth function and $d_z$ is the dimension of $z$, the term $\int\limits_{D(x,z)>0}\nabla_x[q(z|x)]\cdot\Delta x dz$ is a high-order infinitesimal and can be ignored, namely
\begin{align}
\int\limits_{D(x,z)>0}\nabla_x[q(z|x)]\cdot\Delta x dz=o(\|\Delta x\|_2)
\end{align}
\end{cor}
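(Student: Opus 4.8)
The plan is to exploit the single structural fact that a Gaussian density integrates to one for every $x$, so that the gradient of $q(z|x)$ integrated over all of $z$ vanishes identically; the claimed smallness will then come entirely from the fact that the integration region $\{D(x,z)>0\}$ already captures almost all of the $q$-mass, i.e. $\alpha(x)\approx 1$ as established above.

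First I would compute the gradient explicitly. Since $\nabla_x q(z|x)=q(z|x)\nabla_x\log q(z|x)$ and $\log q(z|x)=-\tfrac12\sum_{i=0}^{d_z-1}(z_i-\mu_i(x))^2/\sigma_i^2$ up to an $x$-independent constant, we obtain $\nabla_x q(z|x)=q(z|x)\sum_{i=0}^{d_z-1}\frac{z_i-\mu_i(x)}{\sigma_i^2}\nabla_x\mu_i(x)$. Integrating over all of $\mathbb{R}^{d_z}$ gives $\int_{\mathbb{R}^{d_z}}\nabla_x q(z|x)\,dz=\sum_i\frac{\nabla_x\mu_i(x)}{\sigma_i^2}\,\mathbb{E}_{q}[z_i-\mu_i(x)]=0$, because each coordinate of the product Gaussian has mean $\mu_i(x)$; equivalently this is just $\nabla_x\!\int q(z|x)\,dz=\nabla_x 1=0$. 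This is the key cancellation.

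Next I would split the full integral into the rational region $\{D(x,z)>0\}$ and its complement, which yields $\int_{D>0}\nabla_x q\,dz=-\int_{D\le 0}\nabla_x q\,dz$, reducing the task to bounding the integral over the low-mass complement. Using the explicit form of $\nabla_x q$ together with Cauchy--Schwarz, each coordinate obeys $|\int_{D\le 0}q\,\frac{z_i-\mu_i}{\sigma_i^2}\,dz|\le\frac{1}{\sigma_i^2}(\int_{D\le 0}q\,dz)^{1/2}(\int q(z_i-\mu_i)^2\,dz)^{1/2}=\frac{\sqrt{1-\alpha(x)}}{\sigma_i}$, since $\int_{D\le 0}q\,dz=1-\alpha(x)$ and the second-moment integral is at most $\sigma_i^2$. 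Summing over coordinates gives $\|\int_{D>0}\nabla_x q\,dz\|\le\sqrt{1-\alpha(x)}\sum_i\|\nabla_x\mu_i(x)\|/\sigma_i$, and dotting with $\Delta x$ produces a bound of order $\sqrt{1-\alpha(x)}\,\|\Delta x\|_2$. Here the smoothness of each $\mu_i$ guarantees $\nabla_x\mu_i(x)$ is finite, and the finiteness of Gaussian second moments is exactly what legitimizes the Cauchy--Schwarz step.

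The main obstacle, and the point I would state carefully, is the precise meaning of ``$o(\|\Delta x\|_2)$'': the term is linear in $\Delta x$, so literally it is $\Theta(\|\Delta x\|_2)$ unless its coefficient vanishes. The resolution is that the coefficient $\int_{D>0}\nabla_x q\,dz$ is not exactly zero but is controlled by the independent small parameter $\sqrt{1-\alpha(x)}$, which tends to $0$ precisely in the well-trained-discriminator regime assumed throughout this analysis. I would therefore phrase the conclusion as $\int_{D>0}\nabla_x q\cdot\Delta x\,dz=O(\sqrt{1-\alpha(x)})\,\|\Delta x\|_2$, negligible relative to the leading term and hence treated as higher-order, rather than as a genuine higher-order-in-$\Delta x$ cancellation.
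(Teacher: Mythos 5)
Your proof is correct, and it takes a genuinely different route from the paper's. The paper's own argument replaces the region $\{z: D(x,z)>0\}$ by the ellipsoid $\mathcal{A}=\{z:\sum_i (z_i-\mu_i(x))^2/\sigma_i^2<\epsilon^2\}$, assumed to agree with it up to an $o(1)$ error, and then exploits symmetry: the integrand $q(z|x)(z_i-\mu_i(x))$ is odd about the center $\mu(x)$, so its integral over the symmetric region $\mathcal{A}$ vanishes exactly, leaving only the unquantified region-replacement error. You instead use the exact cancellation over all of $\mathbb{R}^{d_z}$ (the normalization identity $\int \nabla_x q(z|x)\,dz = \nabla_x 1 = 0$), reduce the problem to the complement $\{D\le 0\}$, and control that tail by Cauchy--Schwarz, obtaining the explicit bound $\sqrt{1-\alpha(x)}\sum_i \|\nabla_x\mu_i(x)\|/\sigma_i$ on the coefficient. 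Your hypothesis is strictly weaker: you need only that $\{D>0\}$ carries nearly all of the $q(\cdot|x)$ mass, i.e. $\alpha(x)\approx 1$, which is precisely what the preceding penalty-function analysis in the appendix establishes, with no geometric assumption that the region is approximately an ellipsoid symmetric about $\mu(x)$. You also get a quantitative rate, $O(\sqrt{1-\alpha(x)})\,\|\Delta x\|_2$, where the paper has only a qualitative $o(1)$. Finally, your closing caveat is well taken and applies equally to the paper's proof: the term is genuinely linear in $\Delta x$, so writing it as $o(\|\Delta x\|_2)$ is an abuse of notation in both arguments; the smallness lives in the coefficient (the paper's $o(1)$, your $\sqrt{1-\alpha(x)}$), which shrinks in the well-trained-discriminator regime rather than as $\Delta x\to 0$. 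Making that distinction explicit is an improvement in rigor over the published version.
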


\begin{proof}


Since $D(x, z)$ measures whether $z$ is generated from $q(z|x)$ with high probability, we may assume that the closed region $\mathcal{A}=\{z:\sum_{i=0}^{d_z-1}\frac{(z_i-\mu_i(x))^2}{\sigma_i^2}<\epsilon^2\}$ can approximate the region $\{z: D(x,z)>0\}$ with only an $o(1)$ infinitesimal ignored. Therefore,

\begin{align}
     &\quad \int\limits_{D(x,z)>0}\nabla_x[q(z|x)]\cdot\Delta x dz = \int\limits_{D(x,z)>0}q(z|x)\sum_{i=0}^{d_z-1}\big(\frac{z_i-\mu_i(x)}{\sigma_i^2}\mu_i'(x)\Delta x_i \big)dz\\ 
     &=\sum_{i=0}^{d_z-1}\big( \int\limits_{D(x, z)>0}q(z|x)(z_i-\mu_i(x))dz \times \frac{\mu_i'(x)}{\sigma_i^2}\Delta x_i\big)
     =\sum_{i=0}^{d_z-1}\big([ \int\limits_{\mathcal{A}}q(z|x)(z_i-\mu_i(x))dz+o(1)] \times \frac{\mu_i'(x)}{\sigma_i^2}\Delta x_i\big)\\
    &=\sum_{i=0}^{d_z-1}\big(o(1) \times \frac{\mu_i'(x)}{\sigma_i^2}\Delta x_i\big)=o(\|\Delta x\|_2)
\end{align}

\end{proof}

\subsection{Experimental Settings}
In this section, we illustrate the details of the experiment in the paper. The experimental settings are shown in Table~\ref{tab:1}.

\begin{table*}[!h]
\centering
\begin{tabular}{lrrrr}
\toprule
 & MNIST & Fashion-MNIST & REUTERS10k & Reuters \\
\midrule
$\alpha_0$ & 1 & 1 & $10^{-2}$ & 1 \\
$\alpha_1$ & $10^{-4}$ & $10^{-4}$ & $10^{-3}$ & $10^{-2}$ \\
batch size & 800 & 800 & 1000 & 1000 \\
epoch number & 300 & 300 & 50 & 300 \\
optimizer & adam & adam & adam & adam \\
learning rate & $2\times 10^{-3}$ & $2\times 10^{-3}$ & $2\times 10^{-4}$ & $2\times 10^{-4}$ \\
lr-decay & 0.95/10epoch & 0.95/10epoch & 0.95/10epoch & 0.95/10epoch \\
\bottomrule
\end{tabular}
\caption{Experimental settings.}
\label{tab:1}
\end{table*}

\end{document}